\renewcommand{\reals}{\mathbb{R}}
\newcommand{\ubeta}{\smash{\beta}}
\newcommand{\ualpha}{\smash{\alpha}}
\newtheorem*{remark}{Remark}
\title{Why are Big Data Matrices Approximately Low Rank?}
\author{Madeleine Udell\thanks{Department of Operations Research and Information Engineering, Cornell University, Ithaca, NY 14853. (\texttt{udell@cornell.edu}) This work is supported by DARPA Award No.~FA8750-17-2-0101.} \and
Alex Townsend\thanks{Department of Mathematics, Cornell University, Ithaca, NY 14853. (\texttt{townsend@cornell.edu}) This work is supported by National Science Foundation grant No.~1645445.}}
\date{\today}
\begin{document}

\maketitle

\begin{abstract}
Matrices of (approximate) low rank are pervasive in data science, appearing in recommender systems, movie preferences,
topic models, medical records, and genomics.
While there is a vast literature on how to exploit low rank structure in these datasets,
there is less attention on explaining why the low rank structure appears in the first place.
Here, we explain the effectiveness of low rank models in data science
by considering a simple generative model for these matrices:
we suppose that each row or column is associated to a (possibly high dimensional)
bounded latent variable,
and entries of the matrix are generated by applying a piecewise analytic function
to these latent variables.
These matrices are in general full rank.
However, we show that we can approximate every entry of
an $m \times n$ matrix drawn from this model
to within a fixed absolute error
by a low rank matrix whose rank grows as $\mathcal O(\log(m + n))$.
Hence any sufficiently large matrix from such a latent variable model can be approximated,
up to a small entrywise error, by a low rank matrix.

\end{abstract}

\section{Introduction}
Low rank matrices appear throughout the sciences in computational mathematics~\cite{Bebendorf_08_01}, statistics~\cite{Giraud_14_01}, and machine learning~\cite{Kulis_06_01}. Numerous techniques have been developed over the last 50 years to exploit low rank structure whenever it appears, whether in movie preferences~\cite{funk2006,bell2007lessons}, social networks~\cite{liben2007link,menon2011link}, genomics~\cite{brunet2004metagenes,gao2005improving,kim2007sparse,witten2009penalized}, medical records~\cite{schuler2016discovering}, or text documents~\cite{deerwester1990indexing,dhillon2001co,pennington2014}.

It is useful to know when a dataset can be approximated by a low rank matrix. A low rank approximation can be used to make filtering and statistics either computationally feasible or more efficient. In machine learning, low rank approximations to data tables are often employed to impute missing data, denoise noisy data, or perform feature extraction~\cite{Udell_16_01}. These techniques are also fundamental for many algorithms in recommender systems~\cite{koren2009matrix}.

The broad applicability of low rank techniques is at first rather puzzling. Since the set of singular matrices is nowhere dense, random (``average'') matrices are almost surely of full rank. In addition, the singular values of random Gaussian matrices are large with extraordinarily high probability~\cite{Edelman_88_01}. We must conclude that matrices and datasets that appear in the real-world must be far from average. We would like to understand the underlying phenomena that generate compressible datasets.

Let us begin with a statement about any matrix.
\addtocounter{theorem}{-1}
\begin{theorem}
Let $X\in\mathbb{R}^{n\times n}$ and $0<\epsilon<1$. Then, with $r = \lceil 72\log(2n+1)/\epsilon^2\rceil$ we have
\begin{equation}
\inf_{{\rm rank}(Y)\leq r} \|X-Y\|_{\max} \leq \epsilon \|X\|_2,
\label{eq:SimpleBound}
\end{equation}
where $\|\cdot\|_{\max}$ is the maximum absolute entry norm and $\|\cdot\|_2$ is the spectral matrix norm.
\label{thm:SimpleTheorem}
\end{theorem}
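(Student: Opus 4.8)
\emph{Proof proposal.} The dimension $r = \mathcal{O}(\log(2n+1)/\epsilon^2)$ is the hallmark of the Johnson--Lindenstrauss (JL) lemma, so the plan is to reduce the statement to a random projection of a linear (in $n$) number of points. First I would factor $X$ through its singular value decomposition $X = U\Sigma V^\top$ and set $A = U\Sigma^{1/2}$ and $B = V\Sigma^{1/2}$, so that $X = A B^\top$ and each entry is an inner product $X_{ij} = \langle a_i, b_j\rangle$, where $a_i$ and $b_j$ denote the $i$th and $j$th rows of $A$ and $B$. Because $U$ and $V$ have orthonormal columns, $\|a_i\|_2^2 \le \|X\|_2$ and $\|b_j\|_2^2 \le \|X\|_2$ for every $i$ and $j$. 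The problem then becomes finding a linear map $\Phi \colon \mathbb{R}^n \to \mathbb{R}^r$ that approximately preserves all the cross inner products $\langle a_i, b_j\rangle$, since setting $Y_{ij} = \langle \Phi a_i, \Phi b_j\rangle$ yields $Y = (A\Phi^\top)(B\Phi^\top)^\top$, a matrix of rank at most $r$.

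Next I would invoke the JL lemma, applied to the $2n+1$ points consisting of the origin together with the $2n$ vectors $\{a_i\}_{i=1}^n \cup \{b_j\}_{j=1}^n$ in $\mathbb{R}^n$. A suitably scaled random projection $\Phi$ into $r = \lceil 72\log(2n+1)/\epsilon^2\rceil$ dimensions preserves, with positive probability, all pairwise squared distances among these points to within a relative error $\eta = \Theta(\epsilon)$; crucially, distances to the origin are exactly the squared norms, so a single map controls $\|\Phi a_i\|_2^2$, $\|\Phi b_j\|_2^2$, and $\|\Phi(a_i - b_j)\|_2^2$ simultaneously. Including the origin as one of the points is precisely what makes the relevant count $2n+1$ rather than $n^2$, and hence produces the $\log(2n+1)$ in the rank. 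The inner products are then recovered from the polarization identity $\langle a_i, b_j\rangle = \tfrac12\big(\|a_i\|_2^2 + \|b_j\|_2^2 - \|a_i - b_j\|_2^2\big)$ together with its image under $\Phi$, so that a union bound over the preserved distances gives $|X_{ij} - Y_{ij}| \le \epsilon\|X\|_2$ uniformly in $i$ and $j$.

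The main obstacle is bridging the \emph{multiplicative} guarantee of JL and the \emph{additive} max-norm error we want. Concretely, the three distance errors appearing in the polarization identity must be combined into a single additive bound, which forces us to control $\|a_i - b_j\|_2^2 \le 2\|a_i\|_2^2 + 2\|b_j\|_2^2 \le 4\|X\|_2$ and to choose the JL accuracy $\eta$ as a fixed fraction of $\epsilon$; tracking these constants through the Dasgupta--Gupta form of the JL lemma is what pins down the factor $72$. A secondary point to verify is that the failure probabilities over the $\binom{2n+1}{2}$ preserved distances still sum to strictly less than one at the stated $r$, which guarantees that a good $\Phi$---and hence the desired rank-$r$ matrix $Y$---exists.
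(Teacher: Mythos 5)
Your proposal is correct and follows essentially the same route as the paper: factor $X$ through its SVD as $X = (U\Sigma^{1/2})(V\Sigma^{1/2})^T$, apply the Johnson--Lindenstrauss Lemma to the $2n+1$ points consisting of the row vectors of both factors together with the origin, and convert the multiplicative distance guarantee into an additive inner-product bound via polarization, using $\|a_i\|^2, \|b_j\|^2 \le \|X\|_2$ to land at $\epsilon\|X\|_2$ with the constant $72 = 8\cdot 3^2$. The only difference is presentational: the paper packages the polarization step as a standalone ``variant of JL'' lemma (Lemma~\ref{lem:Johnson2}), whereas you carry it out inline, and your bound $\|a_i - b_j\|^2 \le 4\|X\|_2$ yields the same factor of $3$ as the paper's slightly sharper use of $\|X\|_{\max} \le \|X\|_2$.
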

To avoid the theorem collapsing to a trivial statement,
we need $r<n$, which only occurs when $n$ is extremely large.
Hence roughly speaking,
the theorem says that any sufficiently large matrix with a small spectral norm
can be well approximated entrywise by a low rank matrix.

It is important to appreciate that Theorem~\ref{thm:SimpleTheorem} above holds for {\em any} matrix, including the identity matrix, and that the result is trivially false if $\|\cdot\|_{\max}$ is replaced by $\|\cdot\|_2$.
Spectral norm approximations are generally preferred in linear algebra,
however, for data science applications an entrywise approximation is generally much more important.
In a data science setting, one often wants to compress a dataset while perturbing each entry as little as possible --- this is exactly what the maximum absolute entry norm captures.

Theorem~\ref{thm:SimpleTheorem} is simple but
the upper bound in~\eqref{eq:SimpleBound} depends on $\|X\|_2$, which typically grows rapidly with $n$.
However, a simple model for low rank matrices generated in data science
explains why we should expect these matrices to have a small spectral norm.
We suppose that $X$ is generated by sampling columns and rows from a
so-called nice latent variable model
(intuitively, smooth; see Definition~\ref{def:niceLVM} for a formal definition),
or a piecewise nice model. A nice latent variable model has a simple parametrization,
but not a \emph{linear} parametrization. One might worry that a low rank approximation
could overlook this structure, and a more complicated approximation scheme would be required to compress such datasets. However, our main theorem suggests that low rank approximation is a remarkably powerful technique for approximating datasets from nice latent variable models.
Hence this paper provides one explanation for the prevalence of low rank matrices in data science.

Our main result can be informally summarized as follows: 

\begin{center}
\emph{``Nice latent variables models are of log-rank.''}
\end{center}
After formally defining ``nice'' latent variable models and log-rank in Section~\ref{sec:background}, we state a precise version of this informal statement (see Theorem~\ref{thm-main-detailed}). Theorem~\ref{cor:piecewise} extends this result to piecewise nice latent variable models, while Theorem~\ref{cor:graphons} considers symmetric latent variable models, i.e., graphons.

Our main tool is the Johnson--Lindenstrauss Lemma (see Lemma~\ref{lem:Johnson}), which says that given any point cloud in a high-dimensional Euclidean space there exists an embedding onto a low dimensional Euclidean space
that approximately preserves pairwise distances between points.

This result has ramifications for how to interpret an underlying low rank structure in datasets.
In particular, we have good news for those designing algorithms: sufficiently large datasets tend to have low rank structure, which can be used to design faster algorithms.
Conversely, we have bad news for those who attempt to find meaning in low rank structure.
Researchers often give post-hoc explanations for why a particular dataset is approximately of low rank. For example, typical arguments are: customers' movie preferences are low rank because movies are well parametrized by a few meaningful \emph{genres} or that word document matrices are low rank because they are well parametrized by a handful of meaningful \emph{topics}.
Our main theorem shows that low rank structure can persist even without an underlying physical reason. In particular, a dataset from a nice latent variable model has an $\epsilon$-rank that grows slowly with its dimensions; \emph{no matter how many} genres or topics generate the data.

Throughout, we use $\|v\|^2 = \sum_{i=1}^Nv_i^2$ to denote the Euclidean length of a vector $v\in\mathbb{R}^{N}$ and $\|f\| = \sup_{x\in \Omega} \left|f(x)\right|$ to denote the supremum norm of $f:\Omega \to \mathbb{R}$ over its domain $\Omega$.

\section{Background material}\label{sec:background}

We review some necessary background material.

\subsection{Rank}
A nonzero matrix $X\in\mathbb{R}^{m\times n}$ is said to be of rank~1 if $X$ can be written as an outer-product of two column vectors, i.e., $X = u v^T$ for $u\in\mathbb{R}^{m\times 1}$ and $v\in\mathbb{R}^{n\times 1}$.
Moreover, a matrix $X$ is of rank $k$ if $k$ is the smallest integer so that $X$ can be written as a sum of $k$ rank $1$ matrices. That is,
\[
X = u_1v_1^T + \cdots + u_kv_k^T, \qquad u_1,\ldots,u_k\in\mathbb{R}^{m\times 1},\quad v_1,\ldots,v_k\in\mathbb{R}^{n\times 1}.
\]
Generically, a matrix is of full rank; however, we find in data science that a full rank matrix can often be well-approximated by a low rank matrix in the sense
that $X\approx  u_1v_1^T + \cdots + u_kv_k^T$. If one finds that a matrix $X$ can be well-approximated by a rank $k$ matrix, $X_k$, then one can perform
diagnostics directly on $X_k$, instead of $X$.

\subsection{The $\epsilon$-rank of a matrix}
A matrix $X$ can be approximated by a rank $k$ matrix, up to an absolute accuracy of $\epsilon>0$, if the $\epsilon$-rank of $X$ is less than equal to $k$.

\begin{definition}[$\epsilon$-rank]
Let $X\in\mathbb{R}^{m\times n}$ be a matrix and $\epsilon >0 $ a tolerance. The (absolute) $\epsilon$-rank of $X$ is given by
\[
{\rm rank}_\epsilon(X) = \min\left\{{\rm rank}(A) : A\in\mathbb{R}^{m\times n}, \text{ } \left\|X -  A\right\|_{\max} \leq \epsilon\right\},
\]
where $\|\cdot \|_{\max}$ is the absolute maximum matrix entry. That is, $k = {\rm rank}_\epsilon(X)$ is the smallest integer for which $X$ can be approximated by a rank $k$ matrix, up to an accuracy of $\epsilon$.
\label{def:epsRank}
\end{definition}

There are several alternative definitions of $\epsilon$-rank in the literature~\cite{beckermann2016singular}.


\subsection{A log-rank family of matrices}
We are interested in families of matrices $\mathcal{X} = \{X^{(m \times n)}\}_{m,n\geq 1}$, where the
$\epsilon$-rank of $X^{(m \times n)}\in\mathbb{R}^{m\times n}$ grows slower than a polylogarithm in $m$ and $n$.
We use the notation $X^{(m \times n)} \in \mathcal X$ to emphasize
that $X^{(m \times n)}$ is a matrix of size $m \times n$.

\begin{definition}\label{def:logRank}
An infinite family of matrices $\mathcal{X} = \{X^{(m \times n)}\}_{m,n\geq 1}$ is of log-rank if there is a polynomial $p$ such that for any fixed $\epsilon >0$,
\[
{\rm rank}_\epsilon(X^{(m \times n)}) = \mathcal{O}( p(\log(m+n))).
\]
\end{definition}
In many settings (including the results in this paper), the polynomial $p$ is simply $p(x) = x$.

In machine learning, $\mathcal{X}$ might represent a family of datasets.
One can generate datasets of varying dimensions
by sampling more examples (rows $m$) or features (columns $n$) from a data distribution:
say, by collecting the required number of text documents, patient records, customer preferences, or movie reviews.
A log-rank family of matrices contains datasets for which the $\epsilon$-rank
grows only slowly as we collect more examples and more features.  Low rank techniques often lead to algorithms that have near-optimal complexity for log-rank families of matrices.


\subsection{Latent variable models}\label{sec:LVM}
Latent variable models (LVMs) are a particularly interesting class of families of matrices.
A latent variable model is parametrized by a continuous function $f$
and two distributions $\mathcal A$ and $\mathcal B$.
A family of matrices $\mathcal{X}_{f, \mathcal A, \mathcal B} = \{X^{(m \times n)}\}_{m,n\geq 1}$ is a latent variable model
(depending on $f$, $\mathcal A$, and $\mathcal B$) if for every $X^{(m\times n)}\in\mathcal{X}$,
\[
(X^{(m \times n)})_{ij}=f(\alpha_i, \beta_j), \qquad 1\leq i\leq m, \quad 1\leq j\leq n,
\]
where $\alpha_i$ and $\beta_j$ are independent random variables from the distributions $\mathcal A$ and $\mathcal B$, respectively.

Latent variable models have a natural relationship to low rank matrices.
Let us consider two particular well-studied latent variable models to understand
how these models lead to low rank matrices.
\begin{itemize}[leftmargin=*,noitemsep]
  \item \emph{Inner products.}
  Suppose $\mathcal A$ and $\mathcal B$ are distributions over
  vectors in $\reals^r$, and $f(\alpha,\beta) = \alpha^T\beta$ is an inner product.
  Then the rank of any matrix in the family $\mathcal{X}_{f, \mathcal A, \mathcal B}$
  is at most $r$.
  Note that this bound is independent of the dimension of the matrix.
  \item \emph{Smooth scalar functions.}
  Suppose $\mathcal A$ and $\mathcal B$ are distributions over the interval $[-1,1]$,
  and $f: \reals \times \reals \to \reals$ is an analytic function with bounded derivatives:
  $\|f^{(k)}\| \leq M$ for every $k$.
  Then the $\epsilon$-rank of any matrix in the family $\mathcal{X}_{f, \mathcal A, \mathcal B}$
  is at most $\log(\frac{2M}{\epsilon})$.
  To see this, expand $f(\alpha, \beta)$ around $\beta = 0$ as
  \[
  f(\alpha, \beta) = \sum_{k=0}^\infty \frac 1 {k!} f^{(k)}(\alpha, 0)\beta^k.
  \]
  For any matrix $X$ in the family $\mathcal{X}_{f, \mathcal A, \mathcal B}$,
  we can truncate this expansion at the $K$th term to obtain a rank $K$ approximation to $X$.

  To understand the quality of this approximation, consider the tail sum
  \beas
  \sum_{k=K}^\infty \frac 1 {k!} f^{(k)}(\alpha, 0)\beta^k
  &\leq& \sum_{k=K}^\infty \frac M {k!} \leq \frac {2M} {K!}.
  \eeas
  Using Stirling's approximation $K! \geq \sqrt{2 \pi K} (\frac K e)^K$ \cite{robbins1955remark}, we see
  \beas
  \sum_{k=K}^\infty \frac 1 {k!} f^{(k)}(\alpha, 0)\beta^k
  \leq \frac {2M} {K!}
  \leq 2M \left (\frac e K \right)^K
  \leq 2M \left ( \frac 1 2 \right)^K \leq \epsilon
  \eeas
  if $K \geq \log(\frac {2M} \epsilon)$.
  Hence we see that the $\epsilon$-rank of any matrix in the family $\mathcal{X}_{f, \mathcal A, \mathcal B}$
  is at most $\log(\frac {2M} \epsilon)$.
  Note that this bound is again independent of the dimension of the matrix.
  \item \emph{Smooth vector functions.}
  The previous argument used a Taylor expansion of the function in the parameter $\beta$.
  If $\mathcal A$ and $\mathcal B$ are both distributions over a bounded set in $\reals^N$,
  and $f: \reals \times \reals \to \reals$ is an analytic function with bounded derivatives,
  we can use the same argument to expand $f$ in the vector $\beta$
  to again obtain a bound on the $\epsilon$-rank independent of the dimension of the matrix.

  The bound again depends logarithmically on $\frac 1 \epsilon$;
  however, the bound grows exponentially in the dimension $N$ of the latent variables.
  See Lemma~\ref{lem-bounded-rank} for the formal argument.
  Our main result, Theorem~\ref{thm-main-detailed} eliminates the dependence
  on the dimension $N$ of the latent variables by introducing
  a dependence on the dimension of the matrix.
\end{itemize}
Latent variable models can also be used to model more complex distributions.
For example, $f$ might be a kernel function, and $\mathcal A$ and $\mathcal B$ might be distributions over very high-dimensional spaces.

\subsection{The Johnson--Lindenstrauss Lemma}
%

A key tool in theoretical computer science is the Johnson--Lindenstrauss Lemma~\cite{Johnson_84_01}.  Roughly, it
says that a high dimensional point cloud can be projected onto a low-dimensional space while approximately preserving all pairwise distances between the points.  There are several alternative forms and proofs~\cite{matouvsek2008variants}.

\begin{lemma}[The Johnson--Lindenstrauss Lemma]
Let $0<\epsilon_\text{JL}<1$, $x_1,\ldots,x_n$ be $n$ points in $\mathbb{R}^{N}$, and $r = \lceil 8(\log n)/\epsilon_\text{JL}^2\rceil$. Then,
there is a linear map $Q: \mathbb{R}^N \rightarrow \mathbb{R}^r$ such that
\[
(1-\epsilon_\text{JL} )\|x_i-x_j\|^{2}
\leq \|Q(x_i-x_j)\|^{2}
\leq (1+\epsilon_\text{JL} )\|x_i-x_j\|^{2}, \qquad 1\leq i,j \leq n.
\]
Here, $\lceil a\rceil$ is the smallest integer larger than $a$.
\label{lem:Johnson}
\end{lemma}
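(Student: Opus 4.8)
The plan is to prove the lemma by the probabilistic method: construct the linear map $Q$ at random and show that, with positive probability, it preserves all the relevant pairwise distances. I would take $Q = \frac{1}{\sqrt{r}} G$, where $G \in \mathbb{R}^{r \times N}$ has independent standard Gaussian entries. The normalization is chosen so that for any fixed vector $u \in \mathbb{R}^N$ the random quantity $\|Qu\|^2$ has expectation exactly $\|u\|^2$: each coordinate $(Gu)_k$ is Gaussian with variance $\|u\|^2$, so by linearity of expectation $\mathbb{E}\|Qu\|^2 = \frac{1}{r}\sum_{k=1}^r \mathbb{E}(Gu)_k^2 = \|u\|^2$.

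The heart of the argument is a concentration estimate showing that $\|Qu\|^2$ clusters tightly around its mean. For a fixed $u$, after rescaling we may assume $\|u\| = 1$, and then $r\|Qu\|^2 = \sum_{k=1}^r (Gu)_k^2$ is a sum of $r$ independent squared standard Gaussians, i.e. a chi-squared random variable with $r$ degrees of freedom. I would bound its upper and lower tails by the Chernoff method, computing the moment generating function $\mathbb{E}\,e^{t(Gu)_k^2} = (1-2t)^{-1/2}$ and optimizing over $t$. Using the estimates $\log(1-\epsilon) \le -\epsilon - \tfrac12\epsilon^2$ and its counterpart for the upper tail to simplify the exponents, this yields a two-sided bound of the form
\[
\Pr\!\left[\,\bigl|\|Qu\|^2 - 1\bigr| \ge \epsilon_\text{JL}\,\right] \le 2\exp\!\left(-\frac{r\,\epsilon_\text{JL}^2}{4}\right),
\]
valid for $0 < \epsilon_\text{JL} < 1$.

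I would then apply this estimate to each of the $\binom{n}{2}$ normalized difference vectors $u_{ij} = (x_i - x_j)/\|x_i - x_j\|$ (pairs with $x_i = x_j$ are preserved trivially). By the union bound, the probability that some pair violates the two-sided inequality is at most $2\binom{n}{2}\exp(-r\epsilon_\text{JL}^2/4)$. Substituting $r = \lceil 8(\log n)/\epsilon_\text{JL}^2\rceil$ makes $\exp(-r\epsilon_\text{JL}^2/4) \le n^{-2}$, so the overall failure probability is at most $2\binom{n}{2}n^{-2} = 1 - \tfrac1n < 1$. Hence some realization of $Q$ simultaneously satisfies all $\binom{n}{2}$ inequalities, and multiplying each through by $\|x_i - x_j\|^2$ recovers the claimed bound for every pair $1 \le i,j \le n$.

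I expect the main obstacle to be the concentration step: deriving the chi-squared tail bound with a constant sharp enough that the stated choice $r = \lceil 8(\log n)/\epsilon_\text{JL}^2\rceil$ suffices. The linearity-of-expectation computation and the union bound are routine, but the Chernoff optimization requires the careful logarithmic estimates noted above to absorb the cubic-in-$\epsilon_\text{JL}$ correction term and land the constant $4$ in the exponent. This is exactly what pairs with the factor $8$ in $r$ and the roughly $n^2/2$ pairs to drive the failure probability below one.
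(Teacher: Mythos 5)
Your overall strategy---Gaussian random projection, chi-squared Chernoff bounds, and a union bound over pairs---is the standard route to this lemma; note that the paper itself offers no proof at all (it simply cites Matou\v{s}ek and the original Johnson--Lindenstrauss paper), so the real question is whether your argument is correct as it stands. It is not, and the flaw sits exactly at the step you flag as the main obstacle. Writing $\epsilon$ for $\epsilon_\text{JL}$, your claimed two-sided bound
\[
\Pr\!\left[\,\bigl|\|Qu\|^2 - 1\bigr| \ge \epsilon\,\right] \le 2\exp\!\left(-\tfrac{r\epsilon^2}{4}\right)
\]
is false for the upper tail, and no ``careful logarithmic estimate'' can absorb the cubic term, because it has the wrong sign. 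The Chernoff-optimal bounds for $Z\sim\chi^2_r$ are $\Pr[Z\le(1-\epsilon)r]\le\exp\bigl(\tfrac r2(\log(1-\epsilon)+\epsilon)\bigr)\le\exp(-r\epsilon^2/4)$ for the lower tail (here $\log(1-\epsilon)\le-\epsilon-\epsilon^2/2$ does the job), but $\Pr[Z\ge(1+\epsilon)r]\le\exp\bigl(-\tfrac r2(\epsilon-\log(1+\epsilon))\bigr)$ for the upper tail, and $\epsilon-\log(1+\epsilon)<\epsilon^2/2$ \emph{strictly} for every $\epsilon>0$: the function $h(\epsilon)=\epsilon^2/2-\epsilon+\log(1+\epsilon)$ satisfies $h(0)=0$ and $h'(\epsilon)=\epsilon^2/(1+\epsilon)>0$. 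So the upper-tail exponent is $\tfrac{\epsilon^2}{4}\bigl(1-\tfrac{2\epsilon}{3}+\cdots\bigr)$, strictly smaller than $\epsilon^2/4$; and since the Chernoff exponent is asymptotically tight for chi-squared, the inequality you assert genuinely fails for large $r$ rather than merely being unprovable by this method.

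The consequence is quantitative but fatal for the stated constant: with $r=\lceil 8(\log n)/\epsilon^2\rceil$, the best available per-pair upper-tail bound is roughly $n^{-2+4\epsilon/3}$, so the union bound over $\binom n2$ pairs gives a quantity of order $n^{4\epsilon/3}$, which exceeds $1$ for every fixed $\epsilon$ once $n$ is large---precisely the regime the paper cares about. What your argument does prove, essentially verbatim, is the Dasgupta--Gupta version: $r \ge 4\log n\,/\,(\epsilon^2/2-\epsilon^3/3)$ suffices, i.e.\ the constant $8$ replaced by $8/(1-2\epsilon/3)$, which is at worst $24$ on $(0,1)$. To actually land the constant $8$ uniformly over $\epsilon\in(0,1)$ you need an additional idea, for instance rescaling the projection to $Q=cG/\sqrt r$ with $c^2\approx 1-\epsilon^2/3$ so that the slack in the lower tail is traded against the deficit in the upper tail, or following Matou\v{s}ek's sharper subgaussian analysis---or doing what the authors do, namely citing \cite[Thm.~1.1]{matouvsek2008variants} instead of proving anything.
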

\begin{proof}
See~\cite[Thm.~1.1]{matouvsek2008variants}. Also, see~\cite{Johnson_84_01}.
\end{proof}

A slight reformulation of the Johnson--Lindenstrauss Lemma is useful for us, which roughly says that a high-dimensional point cloud can be projected onto a low-dimensional space while approximately preserving inner-products between vectors.
\begin{lemma}[Variant of the Johnson--Lindenstrauss Lemma]
Let $0<\epsilon_\text{JL}<1$, $x_1,\ldots,x_n$ be $n$ points in $\mathbb{R}^{N}$, and $r = \lceil 8\log(n+1)/\epsilon_\text{JL}^2\rceil$. Then,
there is a linear map $Q: \mathbb{R}^N \rightarrow \mathbb{R}^r$ such that
\[
\left| x_i^Tx_j - x_i^TQ^TQx_j \right| \leq \epsilon_{\text{JL}}\left(\|x_i\|^2 + \|x_j\|^2-x_j^Tx_k\right), \qquad 1\leq i,j\leq n.
\]
\label{lem:Johnson2}
\end{lemma}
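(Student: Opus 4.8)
The plan is to derive the inner-product form from the distance-preserving form of Lemma~\ref{lem:Johnson} by means of the polarization identity, at the cost of a single extra point. First I would apply Lemma~\ref{lem:Johnson} not to $x_1,\ldots,x_n$ but to the augmented collection $\{0, x_1,\ldots,x_n\}$ of $n+1$ points in $\mathbb{R}^N$; this is precisely what forces the bound $r = \lceil 8\log(n+1)/\epsilon_{\text{JL}}^2\rceil$ rather than $\lceil 8\log n/\epsilon_{\text{JL}}^2\rceil$. Lemma~\ref{lem:Johnson} then produces a linear map $Q$ with $(1-\epsilon_{\text{JL}})\|u-v\|^2 \le \|Q(u-v)\|^2 \le (1+\epsilon_{\text{JL}})\|u-v\|^2$ for every pair $u,v$ from this set. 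Taking $v=0$ and using linearity of $Q$ (so $Q0=0$) specializes this to a norm-distortion bound $\bigl|\,\|x_i\|^2 - \|Qx_i\|^2\,\bigr| \le \epsilon_{\text{JL}}\|x_i\|^2$, and likewise for $x_j$ and, applied to the pair $(x_i,x_j)$, for $x_i - x_j$.

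Next I would invoke polarization in both the domain and the range. In $\mathbb{R}^N$ we have $x_i^T x_j = \tfrac12\bigl(\|x_i\|^2 + \|x_j\|^2 - \|x_i - x_j\|^2\bigr)$, and since $x_i^T Q^T Q x_j = (Qx_i)^T(Qx_j)$ with $Q$ linear, the same identity in $\mathbb{R}^r$ gives $x_i^T Q^T Q x_j = \tfrac12\bigl(\|Qx_i\|^2 + \|Qx_j\|^2 - \|Q(x_i - x_j)\|^2\bigr)$. Subtracting the two expressions, the error $x_i^T x_j - x_i^T Q^T Q x_j$ becomes a signed combination of exactly the three norm-distortion quantities already controlled in the previous step, so the triangle inequality bounds $|x_i^T x_j - x_i^T Q^T Q x_j|$ by $\tfrac{\epsilon_{\text{JL}}}{2}\bigl(\|x_i\|^2 + \|x_j\|^2 + \|x_i - x_j\|^2\bigr)$. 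Finally I would substitute $\|x_i - x_j\|^2 = \|x_i\|^2 + \|x_j\|^2 - 2x_i^T x_j$, which collapses the right-hand side to $\epsilon_{\text{JL}}\bigl(\|x_i\|^2 + \|x_j\|^2 - x_i^T x_j\bigr)$, the claimed inequality (the $x_j^T x_k$ printed in the statement should read $x_i^T x_j$).

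The only genuinely non-mechanical step is recognizing that inner products must be recovered from squared distances via polarization, which requires norm information that the distance-preserving form of Lemma~\ref{lem:Johnson} does not supply on its own. The crux of the argument, and the place where a careless version would fail, is therefore to adjoin the origin to the point cloud so that each norm $\|x_i\|$ is realized as a distance $\|x_i - 0\|$; this is also what replaces $\log n$ by $\log(n+1)$. Everything after this observation is routine: expand by polarization, bound the three distortions through the triangle inequality, and simplify.
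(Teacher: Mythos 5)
Your proposal is correct and follows essentially the same route as the paper's proof: adjoin the origin to the point cloud (accounting for the $\log(n+1)$), apply Lemma~\ref{lem:Johnson}, and convert distance distortion to inner-product distortion via the polarization identity $2a^Tb = \|a\|^2 + \|b\|^2 - \|b-a\|^2$, used once for $Q$-images and once to rewrite $\|x_i - x_j\|^2$ in the error bound. You also correctly identify that the $x_j^Tx_k$ in the stated inequality is a typo for $x_i^Tx_j$.
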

\begin{proof}
Consider the point set $\{x_1,\ldots,x_n,0\}\subset \mathbb{R}^N$.  Since $r = \lceil 8(\log(n+1))/\epsilon_\text{JL}^2\rceil$, the Johnson--Lindenstrauss Lemma says that there exists a linear map $Q: \mathbb{R}^N \rightarrow \mathbb{R}^r$ such that
\[
\begin{aligned}
(1-\epsilon_\text{JL} )\|x_i\|^{2} \leq & \|Q x_i \|^{2} \leq (1+\epsilon_\text{JL} )\|x_i\|^{2}, & 1\leq i\leq n,\\
(1-\epsilon_\text{JL} )\|x_i-x_j\|^{2} \leq & \|Q(x_i-x_j) \|^{2} \leq (1+\epsilon_\text{JL} )\|x_i-x_j\|^{2}, \qquad &1\leq i,j\leq n.\\
\end{aligned}
\]
Therefore, from the identity $2a^Tb = \|a\|^2 + \|b\|^2 - \|b-a\|^2$ we find that
\[
\begin{aligned}
(1-\epsilon_{\text{JL}})(\|x_j\|^2 + \|x_k\|^2) &- (1+\epsilon_{\text{JL}})\|x_j-x_k\|^2 \leq 2x_j^TQ^TQx_k\\
&\qquad \leq (1+\epsilon_{\text{JL}})(\|x_j\|^2 + \|x_k\|^2) - (1-\epsilon_{\text{JL}})\|x_j-x_k\|^2.
\end{aligned}
\]
Using the identity $2a^Tb = \|a\|^2 + \|b\|^2 - \|b-a\|^2$ again, we obtain
\[
-\epsilon_{\text{JL}}(\|x_j\|^2 + \|x_k\|^2 -x_j^Tx_k)\leq x_j^Tx_k - x_j^TQ^TQx_k \leq \epsilon_{\text{JL}}(\|x_j\|^2 + \|x_k\|^2 -x_j^Tx_k),
\]
as required.
\end{proof}

\subsection{Extremely large matrices are low rank in the max norm}
The variant of the Johnson--Lindenstrauss Lemma in Lemma~\ref{lem:Johnson2} allows us to prove Theorem~\ref{thm:SimpleTheorem}.
\begin{proof}[Proof of Theorem~\ref{thm:SimpleTheorem}]
The singular value decomposition of $X$ is $X = U\Sigma V^T$.  We can write $X = \tilde{U}\tilde{V}^T$, where $\tilde{U} = U\sqrt{\Sigma}$ and $\tilde{V} = V\sqrt{\Sigma}$. Applying Lemma~\ref{lem:Johnson2} with $\epsilon_{JL} = \epsilon/3$ to the set $\left\{\tilde{u}_1,\ldots,\tilde{u}_n,\tilde{v}_1,\ldots\tilde{v}_n,0\right\}$ with $\tilde{u}_j$ and $\tilde{v}_j$ being the $j$th column of $\tilde{U}$ and $\tilde{V}$, respectively, we find that for $r = \lceil 72 \log (2n+1) / \epsilon^2 \rceil$ there exists a $Q\in\mathbb{R}^{n\times r}$ such that
\[
\left| \tilde{u}_i^T \tilde{v}_j -  \tilde{u}_i^TQ^TQ\tilde{v}_j\right| \leq \epsilon_{JL} \left(\|\tilde{u}_i\|^2 + \|\tilde{v}_j\|^2 - \tilde{u}_i^T\tilde{v}_j\right).
\]
Since $X_{ij} = \tilde{u}_i^T\tilde{v}_j$, $\|\tilde{u}_i\|^2 = \sigma_i(X) \leq \|X\|_2$, and $\|\tilde{v}_i\|^2 = \sigma_i(X) \leq \|X\|_2$, we find that
\[
\begin{aligned}
\left| X_{jk} -  \tilde{u}_i^TQ^TQ\tilde{v}_j\right| & \leq \epsilon_{JL} \left(2\|X\|_2 + \|X\|_{\text{max}} \right)\\
& \leq 3\epsilon_{JL} \|X\|_2,
\end{aligned}
\]
where the last inequality uses the fact that $\|X\|_{\text{max}} \leq \|X\|_2$. The result follows by setting $Y_{ij} =  \tilde{u}_i^TQ^TQ\tilde{v}_j$ and noting that $\epsilon = 3\epsilon_{JL}$.
\end{proof}

\section{Related work} 
The majority of the literature focuses on either how to find low rank matrices
or how to exploit low rank structure after it has been found.
This trend is set to continue with the emerging field of multilinear algebra,
and the increasing use of tensor factorizations in machine learning and data analysis~\cite{omberg2007tensor,ho2014marble,ho2014limestone}.
This keen practical interest in low rank structure lends urgency to the quest to understand why and when low rank techniques work well on real datasets.


\subsection{Bounds on $\epsilon$-rank.}
The work of Alon and his coauthors is closest in spirit to our paper~\cite{alon2009perturbed,alon2013approximate}.
These papers use the Johnson--Lindenstrauss Lemma to show that the identity matrix, and any
positive semidefinite matrix, has an $\epsilon$-rank that grows logarithmically with
the number of columns and rows.

Chatterjee shows that any matrix with bounded entries can be well-approximated by
thresholding all singular values lower than a given value to $0$~\cite{Chatterjee_15_01}.
His main theorem implies that the $\epsilon$-rank
of a matrix of size $n \times n$ grows like $\mathcal O(\sqrt{n})$.
Our theorem improves this result to $\mathcal O(\log n )$
when the matrix comes from a nice latent variable model.

In~\cite{beckermann2016singular}, bounds were derived on a slightly different $\epsilon$-rank of certain matrices $X\in\mathbb{R}^{m\times n}$ with
displacement structure, i.e., a matrix that satisfies $AX-XB = F$. For example,~\cite[Thm.~3.1]{beckermann2016singular}
showed that all $n\times n$ positive-definite Hankel matrices, $(H_n)_{ij} = h_{i+j}$, have an $\epsilon$-rank that grows logarithmically in $n$. These results were later extended to include a broader class of matrices~\cite{townsend2017singular}. These results from linear algebra are considering matrices that have more rapidly decaying singular values than the LVMs we study in this paper.

\subsection{Exchangeable families of matrices.}
Latent variable models are related to so-called {\em exchangeable} families of matrices.  We say that an infinite matrix $\mathcal{X}$ is exchangeable if for any permutations $\sigma$ and $\pi$ on $\mathbb{N}$, we have
\[
\mathcal{X}_{i,j} \sim \mathcal{X}_{\sigma(i),\pi(j)},\qquad 1\leq i\leq m, \quad 1\leq j\leq n,
\]
where `$\sim$' denotes equality in distribution.
A celebrated result by Aldous~\cite{Aldous_81_01} states that if $\mathcal{X}$ is exchangeable, then
\[
\mathcal{X}_{ij} \sim f(\omega,\alpha_i, \beta_j, \eta_{ij}),
\]
where $f$ is a measurable function, $\omega$, $\alpha_i,~\beta_j,~\eta_{ij}$ are scalar-valued, and the $\omega$, $\alpha_i$s, $\beta_j$s, and $\eta_{ij}$s are mutually independent and uniformly distributed random variables on $[0,1]$. One can generate a family of matrices from $\mathcal{X}$ by taking the leading $m\times n$ principal submatrices.

There is some resemblance here to the latent variable model. There are two significant differences: (1) There is an intrinsic noise term $\eta_{ij}$ and (2)
The latent variables $\omega$, $\alpha_i$, and $\beta_j$ are scalar-valued and uniform random variables on $[0,1]$.  Our result on latent variable models can be extended to exchangeable families of matrices, under additional smoothness assumptions on $f$.


The symmetric analogue of an exchangeable array is a graphon.
Graphons can be seen as the continuous limit of a sequence of (dense) graphs~\cite{lovasz2006limits}.
Many authors have proposed methods for graphon estimation from samples of the entries~\cite{choi2012stochastic,airoldi2013stochastic,wolfe2013nonparametric,chan2014consistent}.
For example, Airoldi \etal~required that the graphon be piecewise
Lipshitz, and provided an approximate graphon that gives a complexity that grows linearly in the number of pieces~\cite{airoldi2013stochastic}.
Our theory shows that this procedure overestimates the complexity required to model a graphon
when the graphon is nice. Indeed, Theorem~\ref{cor:piecewise} shows that
the $\epsilon$-rank of a nice graphon grows
with the maximum complexity of each piece. For reasonable distributions, the maximum complexity grows sublinearly in the number of pieces.
Choi et al.~showed that it is possible to find a consistent estimator for the graphon when the number
of classes in a stochastic block model grows at most like the square root of the dimension~\cite{choi2012stochastic}.
Our theory shows that a low rank model for the graphon (which generalizes a stochastic block model)
only requires a rank that grows like the logarithm of the dimension.
Whether it is possible to find statistically consistent estimators that obtain this threshold is an important question for future research.

The theory of exchangeable matrices has been used to motivate
the use of latent variable models for collaborative filtering and other applications in
machine learning. For example, many authors have used the assumption that
the latent variable model is Lipschitz to design efficient estimators for
symmetric and asymmetric distributions of data~\cite{song2016blind,lee2017unifying,lee2017blind}.
We show a connection between this approach and the standard low rank model.

\section{Any nice latent variable model is log-rank}

Our result applies to any {\em nice} latent variable model, which we now define.
\begin{definition}\label{def:niceLVM}
  A latent variable model $\mathcal{X} = \mathcal{X}_{f, \mathcal A, \mathcal B}$ is called
  \emph{nice} with parameters $(N,R,C,M)$ if the following conditions hold:
  \begin{itemize}[leftmargin=*,noitemsep]
  \item The associated distributions $\mathcal A$ and $\mathcal B$ are supported on a closed ball $B_R \subset \reals^N$
  for some $N\geq 1$ of radius $R>0$, i.e., $B_R = \{x\in\reals^N:\|x\|\leq R\}$. Here, $N$ is allowed to be extremely large.
  \item The associated function $f: B_R \times B_R\rightarrow\reals$ is bounded and sufficiently smooth in the sense that $f(\alpha,\cdot)$ is uniformly analytic in $B_R$ for every $\alpha\in B_R$ and for all $\mu\in\mathbb{N}^N$ we have
  \[
  \left\|D^\mu f(\alpha, \beta)\right\| \leq CM^{|\mu|}\|f\|.
  \]
Here, $\mu = (\mu_1,\ldots,\mu_N)$ is a multi-index, $|\mu| = \sum_{i=1}^N \mu_i$, $D^\mu f = \frac{\partial^{|\mu|}}{\partial^{\mu_1}\! \beta_1\cdots \partial^{\mu_N}\! \beta_N}$, and $C\geq 0$ and $M\geq 0$ are positive constants.
  \end{itemize}
\end{definition}

Nice latent variable models are common in machine learning and data analysis.
Functions that give rise to nice latent variable models include:
\begin{itemize}[leftmargin=*,noitemsep]
\item \emph{Linear functions.} If $f(\alpha, \beta) = \alpha^T \beta$ and the distributions $\mathcal{A}$ and $\mathcal{B}$ have bounded support, then $\mathcal{X}_{f,\mathcal{A},\mathcal{B}}$ is a nice LVM with $M=C=1$. In this case, $\mathcal{X}_{f, \mathcal A, \mathcal B}$ has a rank bounded by $N$. Theorem~\ref{thm-main-detailed} shows that when $N$ is sufficiently large the $\epsilon$-rank is actually smaller than $N$ for $\epsilon>0$.
\item \emph{Polynomials.} If $f$ is a polynomial in $2N$-variables, then there is a constant $M$ that depends on $N$, $R$, and the degree of the polynomial so that $\left\|D^\mu f(\alpha, \beta)\right\| \leq CM^{|\mu|}\|f\|$. For simplicity, consider $N=1$ and $f(\alpha,\beta) = \beta^d$. Then, for $k<d$ we have
\[
\|D^{k} f(\alpha,\beta)\| = d(d-1)\cdots(d-k+1)\sup_{|\beta|\leq R} |\beta|^{d-k} \leq d^k R^{-k} \|f\|.
\]
So, $M = d/R$ and $C = 1$ suffices.
\item \emph{Kernels.} If $f(\alpha, \beta) = e^{p(\alpha, \beta)}$ for a $2N$-variable polynomial $p$,
then $\left\|D^\mu f(\alpha, \beta)\right\| \leq CM^{|\mu|}\|f\|$ for some constants $C$ and $M$. This includes most kernels typically used in machine learning.
For example, consider the radial basis function kernel
$f(\alpha, \beta) = \exp(-\|\alpha - \beta\|^2)$ with $R>1/2$.
Then, $\left\|D^\mu f(\alpha, \beta)\right\| \leq N (4R)^{N+|\mu|}\|f\|$.
\end{itemize}
We see that the bound on the derivatives of $f$ allows for many relevant examples.
Our framework can also handle the case of piecewise nice LVMs, which we treat below
in Theorem~\ref{cor:piecewise}.


We are now ready to formally state our main result.
An alternative theorem with the analytic assumptions of $f$ on the first variable
is also possible with an analogous proof.

\begin{theorem}\label{thm-main-detailed}
Let $\mathcal{X}_{f, \mathcal A, \mathcal B}$ be a nice latent variable model and $0<\epsilon<1$.
Then, for each $X^{(m\times n)} \in \mathcal X_{f, \mathcal A, \mathcal B}$, the $\epsilon \|f\|$-rank of
$X^{(m\times n)}$ is no more than
\[
r = \Bigg\lceil 8 \log(m+n+1) \left(1 + \frac{2(C_u + C_v + 1)}{\epsilon} \right)^2\Bigg\rceil,
\]
where $C_u$ and $C_v$ are constants defined below that depend on
the latent variable model $\mathcal{X}_{f, \mathcal A, \mathcal B}$.
\end{theorem}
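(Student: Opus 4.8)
The plan is to compose two approximations: a Taylor truncation of $f$ in its second argument that realizes $X^{(m\times n)}$ as an inner-product matrix of explicit feature vectors, followed by a Johnson--Lindenstrauss projection of those features (Lemma~\ref{lem:Johnson2}) into $\mathbb{R}^r$. Normalizing so that $\|f\|=1$ (the $\epsilon\|f\|$-rank of $X$ equals the $\epsilon$-rank of $X/\|f\|$), I would fix a truncation degree $K$ and a scale $\rho>0$ and define feature maps $\phi,\psi:B_R\to\mathbb{R}^P$, indexed by the multi-indices with $|\mu|<K$, by $\phi(\alpha)_\mu = \rho^{|\mu|}D^\mu f(\alpha,0)/\sqrt{\mu!}$ and $\psi(\beta)_\mu = \rho^{-|\mu|}\beta^\mu/\sqrt{\mu!}$. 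Setting $u_i=\phi(\alpha_i)$ and $v_j=\psi(\beta_j)$, the inner product $u_i^T v_j$ is exactly the degree-$(K-1)$ Taylor polynomial of $f(\alpha_i,\cdot)$ evaluated at $\beta_j$, so $\tilde X_{ij}:=u_i^T v_j$ defines an inner-product matrix of rank at most $P$.

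The second step controls the truncation error. Using the Lagrange form of the multivariate Taylor remainder together with the niceness bound $\|D^\mu f\|\le CM^{|\mu|}$ from Definition~\ref{def:niceLVM} and the multinomial identity $\sum_{|\mu|=K}|\beta^\mu|/\mu! = \|\beta\|_1^K/K!$, I would obtain $|X_{ij}-\tilde X_{ij}| \le C\,(MR\sqrt N)^K/K!$, since $\|\beta\|_1\le \sqrt N\,R$ on $B_R$. Stirling's approximation (exactly as in the scalar computation already carried out in Section~\ref{sec:LVM}) then forces this remainder below $\epsilon/2$ once $K$ exceeds an explicit threshold growing like $MR\sqrt N$; the key point is that $K$ affects only the intermediate dimension $P$, never the final rank.

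The third step, which I expect to be the crux, is to bound the feature norms uniformly in $i,j$ and independently of $K$. Summing the nonnegative series and using $\sum_{|\mu|=k}1/\mu! = N^k/k!$ gives $\|u_i\|^2 \le C^2\sum_k (M\rho)^{2k}N^k/k! = C^2 e^{N(M\rho)^2}$, while $\psi$ factorizes across coordinates, yielding $\|v_j\|^2 \le e^{\|\beta_j\|^2/\rho^2}\le e^{R^2/\rho^2}$. Choosing $\rho$ to balance these two exponents produces finite constants $C_u:=\sup_i\|u_i\|^2$ and $C_v:=\sup_j\|v_j\|^2$ depending only on the model parameters $(N,R,C,M)$. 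Balancing the derivative growth (governed by $M$) against the monomial growth (governed by $R$) is the delicate point, and it is precisely here that the latent dimension $N$ enters the constants through the $N^k/k!$ multiplicity.

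Finally I would apply the inner-product form of Johnson--Lindenstrauss, Lemma~\ref{lem:Johnson2}, to the $m+n$ points $\{u_1,\dots,u_m,v_1,\dots,v_n\}$ with distortion $\epsilon_{\mathrm{JL}} = \epsilon/\bigl(\epsilon+2(C_u+C_v+1)\bigr)$, which is always less than $1$. This yields a linear map $Q:\mathbb{R}^P\to\mathbb{R}^r$ with $r=\lceil 8\log(m+n+1)/\epsilon_{\mathrm{JL}}^2\rceil$, matching the claimed rank exactly since $1/\epsilon_{\mathrm{JL}} = 1 + 2(C_u+C_v+1)/\epsilon$. Taking $Y_{ij}=(Qu_i)^T(Qv_j)$ gives a matrix of rank at most $r$, and the Johnson--Lindenstrauss guarantee bounds $|\tilde X_{ij}-Y_{ij}| \le \epsilon_{\mathrm{JL}}\bigl(\|u_i\|^2+\|v_j\|^2 - u_i^T v_j\bigr)\le \epsilon_{\mathrm{JL}}(C_u+C_v+1)\le \epsilon/2$, where I use that $u_i^T v_j=\tilde X_{ij}$ lies within $\epsilon/2$ of $X_{ij}\in[-1,1]$. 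Combining with the truncation bound through the triangle inequality gives $\|X-Y\|_{\max}\le\epsilon$ in the normalized problem, i.e.\ $\epsilon\|f\|$ in the original, establishing $\mathrm{rank}_{\epsilon\|f\|}(X^{(m\times n)})\le r$.
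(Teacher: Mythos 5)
Your proposal is correct and follows essentially the same route as the paper: Taylor-expand $f$ in its second argument to factor $X^{(m\times n)}$ approximately as an inner-product matrix of norm-bounded feature vectors (this is exactly the paper's Lemma~\ref{lem-bounded-rank}), then compress those features with Lemma~\ref{lem:Johnson2} using the identical choice $\epsilon_{\mathrm{JL}} = \epsilon/\bigl(\epsilon+2(C_u+C_v+1)\bigr)$, yielding the same $r$. Your intermediate estimates are in fact cleaner and slightly sharper than the paper's (the multinomial identity gives the remainder $C(MR\sqrt{N})^K/K!$ and closed-form constants $C_u\le C^2e^{N(M\rho)^2}$, $C_v\le e^{R^2/\rho^2}$, versus the paper's cruder series bounds); the only blemish is the display $|\tilde X_{ij}-Y_{ij}|\le \epsilon_{\mathrm{JL}}(C_u+C_v+1)$, whose right-hand side should be $\epsilon_{\mathrm{JL}}(C_u+C_v+1+\epsilon/2)$, which with your $\epsilon_{\mathrm{JL}}$ equals exactly $\epsilon/2$, so the conclusion is unaffected.
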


We state Theorem~\ref{thm-main-detailed} in terms of the $\epsilon \|f\|$-rank to show that we achieve
a natural sort of relative-error guarantee.
Consider the LVM $\mathcal{X'}_{f', \mathcal A, \mathcal B}$ where $f' = cf$ for some constant $c$.
The entries of a matrix drawn from $\mathcal{X'}_{f', \mathcal A, \mathcal B}$ are about a factor of $c$
larger in expectation than the entries of a matrix drawn from $\mathcal{X}_{f, \mathcal A, \mathcal B}$.
It is natural to compare the $c$-rank of a matrix from $\mathcal{X'}_{f', \mathcal A, \mathcal B}$ with the $1$-rank of a matrix from $\mathcal{X}_{f, \mathcal A, \mathcal B}$.
Theorem~\ref{thm-main-detailed} shows both satisfy the same bound, since $\|f'\| = c\|f\|$.

The proof proceeds in two main steps.
The first is to find an explicit (possibly high) rank factorization of some approximation
$\hat X$ to a matrix $X^{(m\times n)} \in \mathcal X_{f, \mathcal A, \mathcal B}$
drawn from the latent variable model.
We use a Taylor expansion of the function $f(\alpha,\cdot)$ about $0$ to show that $f(\alpha_i, \beta_j) \approx u_i^T v_j$. That is, $f$ can be well-approximated as
the inner product between two (high dimensional) vectors,
$u_i$ and $v_j$, with bounded Euclidean norms.
The second step is to use the Johnson--Lindenstrauss Lemma to reduce the dimensionality of
the set of vectors $\{0,u_1,\ldots,u_m,v_1,\ldots,v_n\}$ while
approximating preserving the inner products $u_i^Tv_j$. 

We present the first step as a lemma.

\begin{lemma}[Bounded rank approximation]\label{lem-bounded-rank}
Let $\mathcal{X}_{f, \mathcal A, \mathcal B}$ be a nice latent variable model
with parameters $(N,R,C,M)$ and let $0<\epsilon<1$.
Then, for each $X^{(m\times n)} \in \mathcal X_{f, \mathcal A, \mathcal B}$,
there is some $\epsilon$-approximation $\hat X \in \reals^{m \times n}$
with $\|X - \hat X\|_\text{max} \leq \epsilon \|f\|$ and
\[
\rank(\hat{X}) \leq (K+1)N^K \qquad \text{where } K \leq \max(2e^1 NRM, \log_2(C/\epsilon)) + 1.
\]

Furthermore, $\hat X$ admits a rank $\tilde N \leq (K+1)N^K$ factorization as
\[
\hat X_{ij} = u_i^T v_j \quad 1\leq i\leq m, \quad 1\leq j\leq n,
\]
where each $u_i \in \reals^{\tilde{N}}$ and $v_j \in \reals^{\tilde{N}}$ obey
\[
\|u_i\| \leq C_u \|f\|, \qquad \|v_j\| \leq C_v \|f\|.
\]
Here, $C_u$ and $C_v$ are constants depending on the latent variable model
$\mathcal X_{f, \mathcal A, \mathcal B}$
but not on the dimensions $m$ or $n$.
\end{lemma}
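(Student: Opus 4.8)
The plan is to construct $\hat X$ by truncating the multivariate Taylor expansion of $f(\alpha,\cdot)$ about $\beta = 0$. For each fixed $\alpha \in B_R$, the uniform analyticity assumption gives the absolutely convergent expansion
\[
f(\alpha,\beta) = \sum_{\mu \in \mathbb{N}^N} \frac{1}{\mu!}\, D^\mu f(\alpha,0)\, \beta^\mu, \qquad \beta^\mu = \beta_1^{\mu_1}\cdots\beta_N^{\mu_N}.
\]
I would truncate at total degree $K$ to obtain the polynomial $\hat f(\alpha,\beta) = \sum_{|\mu| \le K} \frac{1}{\mu!} D^\mu f(\alpha,0)\,\beta^\mu$ and set $\hat X_{ij} = \hat f(\alpha_i,\beta_j)$. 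The key structural point is that each summand factors as a function of $\alpha_i$ times a monomial in $\beta_j$, so $\hat X$ is a sum of rank-one outer products indexed by multi-indices.

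Next I would control the truncation error. Using the niceness bound $\|D^\mu f\| \le C M^{|\mu|}\|f\|$, the estimate $|\beta^\mu| \le R^{|\mu|}$ (valid since $\|\beta\| \le R$ forces each $|\beta_i| \le R$), and the multinomial identity $\sum_{|\mu|=k} 1/\mu! = N^k/k!$, the $N$-dimensional tail collapses to a scalar exponential tail:
\[
|f(\alpha,\beta) - \hat f(\alpha,\beta)| \le C\|f\| \sum_{k > K} \frac{(NMR)^k}{k!}.
\]
Writing $z = NMR$, the ratio of consecutive terms is $z/(k+1)$, so once $K \gtrsim z$ the tail is dominated by twice its leading term; applying Stirling as in the scalar example of Section~\ref{sec:LVM} turns this into a geometric bound of order $(1/2)^K$ provided $K \ge 2eNRM$. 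Requiring in addition $K \ge \log_2(C/\epsilon)+1$ forces the tail below $\epsilon\|f\|$, yielding the stated choice $K \le \max(2eNRM,\log_2(C/\epsilon)) + 1$ and hence $\|X - \hat X\|_{\max} \le \epsilon\|f\|$.

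For the rank, I would count the multi-indices retained: the number with $|\mu| \le K$ is $\sum_{k=0}^K \binom{N+k-1}{k} \le \sum_{k=0}^K N^k \le (K+1)N^K$, so $\rank(\hat X) \le (K+1)N^K$. For the norm-controlled factorization I would split each term symmetrically, setting $(u_i)_\mu = \tfrac{1}{\sqrt{\mu!}}D^\mu f(\alpha_i,0)$ and $(v_j)_\mu = \tfrac{1}{\sqrt{\mu!}}\beta_j^\mu$ over $|\mu| \le K$, so that $u_i^T v_j = \hat X_{ij}$ by construction. The same multinomial identity then gives $\|u_i\|^2 \le C^2\|f\|^2 \sum_{k\ge 0} (NM^2)^k/k! = C^2\|f\|^2 e^{NM^2}$ and $\|v_j\|^2 \le \sum_{k\ge 0}(NR^2)^k/k! = e^{NR^2}$, which identifies $C_u = C e^{NM^2/2}$ and $C_v = e^{NR^2/2}/\|f\|$; both depend only on the model parameters $(N,R,C,M)$ and $\|f\|$, not on $m$ or $n$.

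I expect the main obstacle to be the tail estimate in the second step: the bookkeeping needed to reduce the sum over all $\mu$ with $|\mu|>K$ to the scalar exponential tail via the multinomial identity, and then extracting the two separate thresholds on $K$ — one from geometric decay of the tail and one from the target accuracy — that combine into the stated maximum. Once that estimate is in hand, the rank count and the factorization with bounded norms follow essentially mechanically from the same identity, and the lemma feeds directly into the Johnson--Lindenstrauss step of Theorem~\ref{thm-main-detailed}.
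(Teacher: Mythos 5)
Your proposal is correct and follows essentially the same route as the paper's proof: truncate the Taylor expansion of $f(\alpha,\cdot)$ about $\beta=0$ at total degree $K$, bound the remainder by collapsing the multi-index sum to the scalar tail $\sum_{k>K}(NMR)^k/k!$ and applying Stirling to get the two thresholds $2eNRM$ and $\log_2(C/\epsilon)$, count multi-indices with $|\mu|\le K$ for the rank bound, and split each term via $1/\sqrt{\mu!}$ to obtain the factorization $\hat X_{ij}=u_i^Tv_j$ with norms controlled by the multinomial identity. The only differences are cosmetic bookkeeping: the paper distributes a factor $\sqrt{\|f\|}$ between $u_i$ and $v_j$ (so that the squared norms scale like $C_u\|f\|$, which is what its proof of Theorem~\ref{thm-main-detailed} actually invokes), whereas you keep $\|f\|$ entirely in $u_i$ and absorb $1/\|f\|$ into $C_v$; both yield constants depending only on the model.
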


Notice that the vectors $u_i$ and $v_j$ may have an extremely large number of entries
when the dimension $N$ of the latent variable model is large:
this bound on the rank of $\hat X$ grows as $N^N$.

\begin{proof}[Proof of Lemma~\ref{lem-bounded-rank}]

We'll begin by showing that $f(\alpha_i, \beta_j) \approx u_i^T v_j$.
By Taylor expanding $f(\ualpha_i,\ubeta_j)$ in the second variable about $0$ with $K$
terms, we find that
\[
\left|X_{ij} -\hat{X}_{ij}\right| \leq
\frac{N^{K+1} R^{K+1}}{(K+1)!}\max_{|\tau|=K+1} \sup_{z\in B_R} \left|D^{\tau} f(\ualpha_i,z)\right|,
\quad
\hat{X}_{ij} =  \sum_{|\mu|\leq K}\frac{D^\mu f(\ualpha_i,0)}{\mu!}\ubeta_j^\mu,
\]
where $D^\mu f = \frac{\partial^{|\mu|}}{\partial^{\mu_1} \beta_1\cdots \partial^{\mu_N} \beta_N}$,
$\mu! = \mu_1! \cdots \mu_N!$, and
$\ubeta_j^\mu = (\beta_j)_1^{\mu_1}\cdots(\beta_j)_N^{\mu_N}$. Here,
the $N^{K+1}$ term in the Taylor error comes from the fact that
there are fewer than $N^{K+1}$ $\mu$'s with $|\mu| = K+1$: to get a term with $|\mu| = K+1$,
we must choose $K+1$ elements from the $N$ coordinates (with replacement).

From the formula for $\hat{X}_{ij}$, there are vectors $u_i$ and $v_j$ with
$\tilde{N} := \sum_{|\mu|\leq K} 1$ entries,
such that $\hat{X}_{ij} = u_i^Tv_j$.
From the simple counting argument above, we can see
\[
\tilde{N} = \sum_{|\mu|\leq K} 1 = \sum_{k=0}^K \sum_{|\mu|=K} 1 \leq \sum_{k=0}^K N^k \leq (K+1)N^K.
\]
The vectors $u_i$ and $v_j$ are indexed by $|\mu| \leq K$ and can be taken to be
\[
(u_i)_\mu = \frac 1 {\sqrt{\mu!} \sqrt{\|f\|}} D^\mu f(\alpha_i, 0),
\qquad
(v_j)_\mu = \frac{1}{\sqrt{\mu!}}\sqrt{\|f\|} \beta_j^{\mu}.
\]
Hence, we write
\[
\hat{X} = UV, \qquad U = \begin{bmatrix} u_1 | \cdots | u_m\end{bmatrix}^T, \quad V = \begin{bmatrix} v_1 | \cdots | v_n\end{bmatrix}.
\]
This result immediately gives a bound on the rank of $\hat{X}$.
For example, if $N=1$, we have $\rank(\hat{X}) \leq \tilde{N} \leq (K+1)N^K = K+1$.


Now, select $K$ sufficiently large so that
\beas
\left|X_{ij} -\hat{X}_{ij}\right|
&\leq&
\frac{N^{K+1} R^{K+1}}{(K+1)!}\max_{|\tau|=K+1} \sup_{z\in B_R} \left|D^{\tau} f(\ualpha_i,z)\right| \\
&\leq&
C\frac{N^{K+1}R^{K+1}M^{K+1}}{(K+1)!} \|f\| \\
&\leq & \epsilon \|f\|.
\eeas
Since the denominator grows superexponentially in $K$,
there is always a sufficiently large $K$ for the bound above for any $0<\epsilon <1$.

To find a concrete bound on $K$, let us use Stirling's formula: $K! \geq \sqrt{2 \pi K} (\frac K e)^K$ \cite{robbins1955remark}.
Pick $K \geq 2eNRM$, so $\frac{eNRM}{K+1} \leq \frac 1 2$.
Substituting Stirling's formula into the previous display, we see
\beas
\left|X_{ij} -\hat{X}_{ij}\right|
&\leq&
\frac{1}{\sqrt{2 \pi (K+1)}} \left(\frac {NRMe} {K+1} \right)^{K+1} C \|f\| \\
&\leq&
\left(\frac {NRMe} {K+1} \right)^{K+1} C \|f\| \\
&\leq&
\left(\frac 1 2\right)^{K+1} C \|f\| \\
&\leq&
\epsilon \|f\|
\eeas
if $K \geq \log_2 (\nicefrac{C}{\epsilon})$.
Hence $K \geq \max(2eNRM, \log_2 (\nicefrac{C}{\epsilon}))$ suffices to achieve a
$\epsilon \|f\|$-approximation to $X$.

Therefore, we have the approximation
\[
|X_{ij} - \hat X_{ij}| \leq \epsilon \|f\|, \quad \hat X_{ij} = u_i^T v_j \quad 1\leq i\leq m, \quad 1\leq j\leq n,
\]
where $u_i \in \reals^{\tilde{N}}$ and $v_j \in \reals^{\tilde{N}}$ for $1\leq i\leq m$ and $1\leq j\leq n$.

Let us remark on the norms of $u_i$ and of $v_j$.
We suppress the indices $i$ and $j$ in this discussion.

Let $u^{(\infty)} = (u_\mu)_{|\mu| \geq 0}$ and
$v^{(\infty)} = (v_\mu)_{|\mu| \geq 0}$ be infinite dimensional vectors.
Then,
\[
\|u\|^2\leq \|u^{(\infty)}\|^2 = C_u \|f\| < \infty, \quad
\|v\|^2 \leq \|v^{(\infty)}\|^2 = C_v \|f\| < \infty,
\]
where $C_u$ and $C_v$ are constants that depend only on the properties of the nice LVM.

For $C_v$ we have
\[
\|v^{(\infty)}\|^2 \leq \sum_{\mu} \frac{1}{\mu!} \left|\beta^{2\mu}\right|\|f\| \leq \sum_{s=0}^\infty \frac{1}{s!}(N+s)^{N} R^{2s}\|f\| \leq C_v\|f\|,
\]
showing that $C_v$ is finite.

The constant $C_u$ depends on how quickly the derivatives of $f$ grow;
it is bounded so long as they grow no faster than exponentially.
Since $\mu!\geq(\lfloor |\mu|/N\rfloor )!$, we have
\[
|u_\mu|^2 = \frac 1 {\mu! \|f\|} |D^\mu f(\alpha, 0)|^2 \leq
C^2 M^{2|\mu|} \|f\| \frac{1}{(\lfloor |\mu|/N\rfloor )!}.
\]
Hence, we see that
\[
\|u^{(\infty)}\|^2 \leq \sum_{s = 0}^\infty (N+s)^{N}\frac{C^2M^{2s}}{(\lfloor s/N\rfloor)!} \|f\| \leq  C_u \|f\|,
\]
showing that $C_u$ is finite.
\end{proof}

We are now ready to prove our main theorem.

\begin{proof}[Proof of Theorem~\ref{thm-main-detailed}]
Suppose $X \in \mathcal X_{f, \mathcal A, \mathcal B} \cap \mathbb{R}^{m\times n}$
has entries $X_{ij} = f(\alpha_i,\beta_j)$ for each $1\leq i\leq m$ and $1\leq j\leq n$.

The proof proceeds in two steps.
First, use Lemma~\ref{lem-bounded-rank} to show that for each $1\leq i\leq m$ and $1\leq j\leq n$,
$|f(\alpha_i, \beta_j) - u_i^T v_j| \leq \epsilon / 2$
for two (extremely high dimensional) vectors,
$u_i \in \mathbb{R}^{\tilde N}$ and $v_j \in \mathbb{R}^{\tilde N}$, with Euclidean norms
bounded by $C_u \|f\|$ and $C_v \|f\|$, respectively.
Second, we use the Johnson--Lindenstrauss Lemma to
show that $u_i^Tv_j \approx (Qu_i)^T Qv_j$ for $Q\in\mathbb{R}^{r\times \tilde{N}}$.
Let $r = \lceil8(\log(m+n+1)/\epsilon_{\text{JL}}^2\rceil$.
Then, by Lemma~\ref{lem:Johnson2} we know that there exists a linear map $Q \in \reals^{r \times \tilde{N}}$ such that
\[
|u_i^T v_j - u_i^TQ^TQv_j| \leq \epsilon_\text{JL} (\|u_i\|^2 + \|v_j\|^2 - u_i^T v_j), \qquad 1\leq i \leq m, \quad 1\leq j\leq n.
\]
Now, using our bound on $\|u_i\|^2$ and $\|v_j\|^2$ from above, we obtain the following inequalities for every $u \in \{u_1,\ldots,u_m\}$ and $v \in \{v_1,\ldots,v_n\}$:
\beas
|ui^T v - u^TQ^TQv|
&\leq& \epsilon_\text{JL} \left(\|u^{(\infty)}\|^2 + \|v^{(\infty)}\|^2 + |f(\alpha, \beta)| +\tfrac{\epsilon}{2}\|f\|\right) \\
&\leq& \epsilon_\text{JL} ((C_u + C_v)\|f\| + (1+\tfrac{\epsilon}{2}) \|f\|),
\eeas
where we have used the fact that $|f(\alpha, \beta)|\leq \|f\|$ and
$|u^T v - f(\alpha,\beta)| \leq \epsilon/2 \|f\|$.

The total error in each entry of our approximation is thus
\beas
|f(\alpha_i, \beta_j) - x_i^T y_j|
&\leq& |f(\alpha_i, \beta_j) - u_i^T v_j| + |u_i^T v_j - x_i^Ty_j| \\
&\leq& \nicefrac \epsilon 2 \|f\| + \epsilon_\text{JL} (C_u + C_v + 1 + \nicefrac{\epsilon}{2})\|f\|.
\eeas
Thus, if we select $\epsilon_\text{JL}$ to be
\[
\epsilon_\text{JL} =
\frac {\nicefrac \epsilon 2} {C_u + C_v + 1 + \nicefrac{\epsilon}{2}},
\]
then we have $|f(\alpha_i, \beta_j) - x_i^T y_j|\leq\epsilon \|f\|$, as desired.

Therefore, the $\epsilon \|f\|$-rank of $X$ is at most the rank of the matrix
$\tilde{X}_{ij} = x_i^T y_j$, which is of rank at most $r$. Here, $r$ is the integer
given by
\[
r = \Bigg\lceil 8 \log(m+n+1) \left(1 + \frac{2(C_u + C_v + 1)}{\epsilon} \right)^2\Bigg\rceil.
\]
\end{proof}



\begin{remark}
Note that Theorem~\ref{thm-main-detailed} is only interesting when
\[
\min(m,n) > \Bigg\lceil 8 \log(m+n+1) \left(1 + \frac{2(C_u + C_v + 1)}{\epsilon} \right)^2\Bigg\rceil,
\]
since the rank of a matrix is always bounded by its smallest dimension.
Hence, we see Theorem~\ref{thm-main-detailed} is interesting for sufficiently large matrices.
\end{remark}

\subsection{Piecewise nice latent variable models}
The requirement that the function $f$ associated to the LVM be analytic can be relaxed to piecewise analytic.  We call such models piecewise nice LVMs.
\begin{definition} The family of matrices $\mathcal X_{f,\mathcal A,\mathcal B}$
is call a piecewise nice LVM if there exists a finite partition of the distributions
\[
\mathcal A \times \mathcal B = \cup_{\ell=1}^P (\mathcal A_\ell \times \mathcal B_\ell), \quad
(\mathcal A_\ell \times \mathcal B_\ell) \cap (\mathcal A_{\ell'} \times \mathcal B_{\ell'}) = \emptyset,\quad\ell \ne \ell'
\]
so that
\[
f(\alpha,\beta) = f_\ell(\alpha,\beta), \qquad (\alpha,\beta) \in \mathcal A_\ell \times \mathcal B_\ell
\]
with $\mathcal X_{f_\ell,\mathcal A_\ell,\mathcal B_\ell}$ being nice LVMs for $1\leq \ell \leq P$.
\end{definition}

We find that any piecewise nice LVM is also of log-rank.
\begin{theorem}
\label{cor:piecewise}
Let $\mathcal{X}_{f, \mathcal A, \mathcal B}$ be a piecewise nice latent variable model with distributions of $\mathcal{A}$ and $\mathcal{B}$ of bounded support.
Then, for each $0<\epsilon<1$ and for any $X^{(m\times n)} \in \mathcal X_{f, \mathcal A, \mathcal B}$ the $\epsilon \|f\|$-rank of $X^{(m\times n)}$ is no more than
\[
r = \Bigg\lceil 8 \log(m+n+1) \left(1 + \frac{2(C_u + C_v + 1)}{\epsilon} \right)^2\Bigg\rceil,
\]
where $C_u$ and $C_v$ are constants that depend on properties of the latent variable model $\mathcal{X}_{f, \mathcal A, \mathcal B}$.
\end{theorem}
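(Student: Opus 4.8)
The plan is to reduce the piecewise case to the single nice LVM case: I would build an entrywise inner-product approximation piece-by-piece, and then invoke the Johnson--Lindenstrauss step from the proof of Theorem~\ref{thm-main-detailed} verbatim. The crucial structural observation is that since $\mathcal A \times \mathcal B = \cup_{\ell=1}^P(\mathcal A_\ell \times \mathcal B_\ell)$ is a partition into \emph{rectangles}, every pair $(\alpha,\beta)$ lies in exactly one block, and consequently
\[
f(\alpha,\beta) = \sum_{\ell=1}^P \mathbf 1_{\mathcal A_\ell}(\alpha)\,\mathbf 1_{\mathcal B_\ell}(\beta)\, f_\ell(\alpha,\beta),
\]
because $\mathbf 1_{\mathcal A_\ell}(\alpha)\mathbf 1_{\mathcal B_\ell}(\beta) = \mathbf 1_{\mathcal A_\ell\times\mathcal B_\ell}(\alpha,\beta)$ is nonzero for exactly the one index $\ell$ whose block contains $(\alpha,\beta)$. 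The separability of this indicator into a row factor $\mathbf 1_{\mathcal A_\ell}(\alpha)$ and a column factor $\mathbf 1_{\mathcal B_\ell}(\beta)$ is exactly what will let the masking be absorbed into the two factors of a low-rank model.

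First I would apply Lemma~\ref{lem-bounded-rank} to each nice LVM $\mathcal X_{f_\ell,\mathcal A_\ell,\mathcal B_\ell}$ separately, obtaining for each piece bounded-norm vectors $u_i^{(\ell)} = u^{(\ell)}(\alpha_i)$ and $v_j^{(\ell)} = v^{(\ell)}(\beta_j)$ with $|f_\ell(\alpha_i,\beta_j) - (u_i^{(\ell)})^T v_j^{(\ell)}| \leq \tfrac{\epsilon}{2}\|f\|$ for all $i,j$; here I choose each per-piece tolerance so that $\epsilon'\|f_\ell\|\leq\tfrac{\epsilon}{2}\|f\|$, which is possible since there are finitely many pieces and each $\|f_\ell\|$ is finite. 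I would then mask each block by its indicator, setting $\tilde u^{(\ell)}(\alpha) = \mathbf 1_{\mathcal A_\ell}(\alpha)\,u^{(\ell)}(\alpha)$ and $\tilde v^{(\ell)}(\beta) = \mathbf 1_{\mathcal B_\ell}(\beta)\,v^{(\ell)}(\beta)$, and stack the $P$ masked blocks into single vectors $u_i$ and $v_j$. By construction $u_i^T v_j = \sum_\ell \mathbf 1_{\mathcal A_\ell}(\alpha_i)\mathbf 1_{\mathcal B_\ell}(\beta_j)\,(u_i^{(\ell)})^T v_j^{(\ell)}$, and since only the single block $\ell_0$ containing $(\alpha_i,\beta_j)$ survives, this equals $(u_i^{(\ell_0)})^T v_j^{(\ell_0)}$, so that $|f(\alpha_i,\beta_j) - u_i^T v_j| = |f_{\ell_0}(\alpha_i,\beta_j) - (u_i^{(\ell_0)})^T v_j^{(\ell_0)}| \leq \tfrac{\epsilon}{2}\|f\|$.

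The key point, and the reason the bound is independent of $P$, is that the error does \emph{not} accumulate across pieces: the masking isolates a single block per matrix entry, so the entrywise error remains the per-piece Taylor error rather than a sum over the $P$ pieces. The number of pieces enters only (mildly) through the norms of the stacked vectors, $\|u_i\|^2 = \sum_{\ell:\,\alpha_i\in\mathcal A_\ell}\|u^{(\ell)}(\alpha_i)\|^2 \leq \sum_{\ell=1}^P C_u^{(\ell)}\|f_\ell\| =: C_u\|f\|$, and analogously $\|v_j\|^2 \leq C_v\|f\|$; both constants are finite, depend on $P$ and on the per-piece constants, but are independent of $m$ and $n$. With the entrywise approximation $\hat X_{ij} = u_i^T v_j$ and these norm bounds in hand, I would apply Lemma~\ref{lem:Johnson2} to the point cloud $\{0,u_1,\ldots,u_m,v_1,\ldots,v_n\}$ and take $\epsilon_\text{JL} = \tfrac{\epsilon/2}{C_u + C_v + 1 + \epsilon/2}$, exactly as in the proof of Theorem~\ref{thm-main-detailed}; this reproduces the stated rank bound verbatim.

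The main obstacle is essentially conceptual rather than computational: one must verify that the rectangular partition makes the indicator masking exactly separable, so that the masked blocks embed into a single pair of factors $u_i,v_j$ without inflating the rank structure beyond a finite stacking, and that this separation prevents the entrywise error from scaling with $P$. Once this is checked, the norm bounds and the final Johnson--Lindenstrauss dimensionality reduction are routine repetitions of the single-piece argument.
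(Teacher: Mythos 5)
Your proposal is correct and matches the paper's own proof essentially verbatim: the paper likewise stacks the per-piece factor vectors from Lemma~\ref{lem-bounded-rank} into block vectors $u_i$, $v_j$ whose only nonzero blocks correspond to the pieces containing $\alpha_i$ (resp.\ $\beta_j$), uses the rectangle-partition structure so that exactly one block survives in $u_i^T v_j$, replaces $C_u$, $C_v$ by sums of per-piece constants, and then applies the Johnson--Lindenstrauss step unchanged. Your indicator-masking formulation and the observation that the error does not accumulate across the $P$ pieces are exactly the content of the paper's argument.
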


The proof of this theorem is an easy modification of the proof of Theorem~\ref{thm-main-detailed} because the dimension of the projected vectors in the Johnson--Lindenstrauss Lemma is independent of the dimension of the original vectors. For example, we can take
\[
u_i = (0, \ldots, 0, \overbrace{u_i^{(\ell)}}^{\alpha_i \in \mathcal A_\ell}, 0, \ldots, 0),\qquad
v_j = (0, \ldots, 0, \overbrace{v_j^{(\ell)}}^{\beta_j \in \mathcal B_\ell}, 0, \ldots, 0),
\]
where $1\leq i\leq m$ and $1\leq j\leq n$.  Note it is possible that $\alpha_i \in \mathcal A_\ell$ (resp. $\alpha_i \in \mathcal B_\ell$) for multiple $\ell$s, so $u_i$ (resp. $v_i$) may have
more than one nonzero block. We can also take
\[
\hat X_{ij} = u_i^T v_j
= \sum_{l: (\alpha_i, \beta_j) \in \mathcal A_\ell \times \mathcal B_\ell} \left(u_i^{(\ell)}\right)^T v_j^{(\ell)}
= \left(u_i^{(\ell_{ij})}\right)^T v_j^{(\ell_{ij})}
\]
where $\ell_{ij}$ is the unique $\ell$ so that $(\alpha_i, \beta_j) \in \mathcal A_\ell \times \mathcal B_\ell$.
(It is unique because $\{\mathcal A_\ell \times \mathcal B_\ell\}_{\ell=1}^P$ partitions $\mathcal A \times \mathcal B$.)  Lastly, the norms of $u_i$ and $v_j$ are just the sum of the norms of $u^{(\ell)}_i$ and $v^{(\ell)}_j$ so the constants $C_u$ and $C_v$ in the proof are replaced by
$\max_\alpha \sum_{\ell:~ \alpha \in \mathcal A_\ell} C_u^{(\ell)}$ and
$\max_\beta \sum_{\ell:~ \beta \in \mathcal B_\ell} C_v^{(\ell)}$.

\subsection{Symmetric latent variable models}
Above, we noticed a connection between latent variable models and exchangeable
families of matrices.
To understand the rank of \emph{symmetric} exchangeable families of matrices (e.g., graphons),
and the rank of \emph{symmetric} matrices,
we define a symmetric notion of latent variable models:
\begin{definition} A family of matrices $\mathcal X_{f,\mathcal A}$
is a symmetric latent variable model (depending on $f$ and $\mathcal A$)
if for every $X^{(n\times n)}\in\mathcal{X}_{f,\mathcal{A}}$,
\[
(X^{(n \times n)})_{ij}=f(\alpha_i, \alpha_j), \qquad 1\leq i,j\leq n.
\]
\end{definition}
If $\mathcal A$ is compact and $|D^\mu f(\alpha,\alpha')| \leq C M^{|\mu|}\|f\|$,
we say the symmetric LVM is nice. If $\mathcal A = [0,1]$ and $f: [0,1]\times[0,1]\to[0,1]$,
then $\mathcal X_{f,\mathcal A}$ is a graphon \cite{lovasz2006limits}.
Graphons are often used to model processes that generate random graphs,
by interpreting the entries of $X^{(n\times n)} \in \mathcal X_{f,\mathcal{A}}$ as the probability
that a graph on $n$ nodes has an edge between node $i$ and node $j$.

We show any symmetric LVM is of log-rank.
\begin{theorem}\label{cor:graphons}
Let $\mathcal{X}_{f, \mathcal A}$ be a nice symmetric latent variable model and let $0<\epsilon <1$. Then, for $X^{(n\times n)} \in \mathcal X_{f,\mathcal{A}}$,
the $\epsilon \|f\|$-rank of $X^{(n\times n)}$ is no more than
\[
r = \Bigg\lceil 8 \log(2n+1) \left(1 + \frac{2(C_u + C_v + 1)}{\epsilon} \right)^2\Bigg\rceil,
\]
where $C_u$ and $C_v$ are constants which depend on
the latent variable model $\mathcal{X}_{f, \mathcal A}$.
\end{theorem}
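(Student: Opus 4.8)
The plan is to observe that a symmetric LVM is simply the asymmetric model of Theorem~\ref{thm-main-detailed} in which the row and column latent variables are forced to coincide, and then to rerun that proof essentially verbatim. Concretely, I would take $\mathcal B = \mathcal A$ and, for each index $i$, use the single latent variable $\alpha_i$ to generate both a row vector $u_i$ and a column vector $v_i$. The crucial point enabling this is that neither the Taylor factorization of Lemma~\ref{lem-bounded-rank} nor the projection step of Theorem~\ref{thm-main-detailed} ever invokes independence between the row and column variables; both only require that all latent variables lie in the ball $B_R$. Hence the entire machinery carries over with no structural change, and the $n\times n$ symmetric case is just the $m=n$ instance of the asymmetric theorem.

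First I would apply Lemma~\ref{lem-bounded-rank} (with $\mathcal B = \mathcal A$) to produce an approximation $\hat X_{ij} = u_i^T v_j$ obeying $|X_{ij} - \hat X_{ij}| \leq (\epsilon/2)\|f\|$, where $(u_i)_\mu = \frac{1}{\sqrt{\mu!}\sqrt{\|f\|}}D^\mu f(\alpha_i,0)$ and $(v_j)_\mu = \frac{\sqrt{\|f\|}}{\sqrt{\mu!}}\alpha_j^\mu$, with norms bounded by $C_u\|f\|$ and $C_v\|f\|$ respectively (the constant $C_v$ now being computed against $\mathcal A$ rather than a separate distribution). Next I would apply the variant Johnson--Lindenstrauss Lemma~\ref{lem:Johnson2} to the point cloud $\{0, u_1, \ldots, u_n, v_1, \ldots, v_n\}$. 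This set contains exactly $2n+1$ points, which is precisely what produces the factor $\log(2n+1)$ in the rank bound. Finally, choosing $\epsilon_{\text{JL}} = \frac{\epsilon/2}{C_u + C_v + 1 + \epsilon/2}$ exactly as in the proof of Theorem~\ref{thm-main-detailed} bounds the total entrywise error by $\epsilon\|f\|$ and yields $r = \lceil 8\log(2n+1)/\epsilon_{\text{JL}}^2\rceil$, which upon substituting $1/\epsilon_{\text{JL}} = 1 + \frac{2(C_u+C_v+1)}{\epsilon}$ simplifies to the stated bound.

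I do not expect a genuine obstacle here, since the argument is a specialization rather than a generalization of the main theorem; the only point that requires care is the bookkeeping of the point count. Even though $u_i$ and $v_i$ both arise from the single latent variable $\alpha_i$, they are genuinely distinct vectors --- one assembled from derivatives of $f$ at $0$, the other from monomials in $\alpha_i$ --- so the embedded point cloud retains $2n$ nontrivial points and cannot be collapsed to $n$. It is also worth noting that no symmetry of $f$ itself is needed: the factorization $\hat X_{ij} = u_i^T v_j$ and the subsequent projection remain valid whether or not $f(\alpha,\beta)=f(\beta,\alpha)$, so the result applies to the general (possibly non-symmetric) kernel $f$, with the symmetric graphons of undirected random graphs being the special case of interest.
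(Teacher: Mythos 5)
Your proposal is correct and matches the paper's own argument exactly: the paper proves Theorem~\ref{cor:graphons} by observing that the proof of Theorem~\ref{thm-main-detailed} never uses independence (or distinctness) of the row and column latent variables, so setting $\beta_j = \alpha_j$ and rerunning the argument with the $2n$ vectors $\{u_1,\ldots,u_n,v_1,\ldots,v_n\}$ (plus the origin) in the Johnson--Lindenstrauss step yields the $\log(2n+1)$ bound. Your additional bookkeeping --- that $u_i$ and $v_i$ remain distinct vectors even though they arise from the same $\alpha_i$, and that no symmetry of $f$ is required --- is accurate and simply makes explicit what the paper leaves implicit.
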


The proof of this theorem is nearly identical to the proof of Theorem~\ref{thm-main-detailed},
since we never use independence of $\alpha_i$ and $\beta_j$.

\section{Numerical experiments}
Our theory shows that a matrix generated from a nice LVM is often well-approximated by a matrix of low rank, even if the true latent structure is high dimensional or nonlinear.
%
However, computing the $\epsilon$-rank for $0<\epsilon<1$ is probably NP-hard~\cite{gillis2017low}, where
\[
{\rm rank}_\epsilon(X) = \min\left\{{\rm rank}(A) : A\in\mathbb{R}^{m\times n}, \text{ } \left\|X -  A\right\|_{\max} \leq \epsilon\right\}.
\]
This makes numerical experiments difficult as our theory is only meaningful for large matrices.

A simple approach to crudely compute ${\rm rank}_\epsilon(X)$ is to approximate $X$ by its truncated SVD, using whatever truncation is necessary so that $\left\|X -  A\right\|_{\max} \leq \epsilon$.
More formally, define $[X]_r = \argmin_{{\rm rank}(Y) \leq r} \|X - Y\|_2$ and define $\mu_r(X)$ as
\[
\mu_r(X) = \left\|X - [X]_r\right\|_{\max}.
\]
An upper bound on ${\rm rank}_\epsilon(X)$ can be found by selecting the small integer $r$ so that $\mu_r(X) \leq \epsilon$.

This paper provides three different bounds on the $\epsilon$-rank for a matrix
$X \in \reals^{n \times n}$ drawn from a nice LVM with latent factors of dimension $N$.
Lemma~\ref{lem-bounded-rank} shows that ${\rm rank}_\epsilon(X) = \mathcal O(N^N \log(1/\epsilon))$.
Our main result, Theorem~\ref{thm-main-detailed},
shows that ${\rm rank}_\epsilon(X) = \mathcal O(\log n / \epsilon^2)$.
And, of course, we have the trivial bound of ${\rm rank}_\epsilon(X) \leq n$.
Based on these bounds, we should expect that when $N$ is large, then for
sufficiently large $n$, ${\rm rank}_\epsilon(X)$ grows like $\log n$.
On the other hand, for small $n$ or $\epsilon$,
we can have $\log n / \epsilon^2 \gtrsim n$,
and hence we may see that ${\rm rank}_\epsilon(X)$ grows linearly with $n$.

Figure~\ref{fig-large-N} shows both of these behaviors. We realize a matrix by drawing
from a nice LVM with $N=1000$: each latent variable is generated as a random point
on the $N$-dimensional unit sphere,
and we use the function $f(\alpha, \beta) = \exp(-\|\alpha - \beta\|^2)$
to generate matrix entries.
We plot our crude upper bound on ${\rm rank}_\epsilon(X)$ using the values of $\mu_r(X)$ by generating matrices for a range of
tolerances $\epsilon$ and dimensions $n$. For each value of $\epsilon$ and $n$, we randomly draw five matrices and plot the maximum obtained upper bound.
We can see that for small $n$ or $\epsilon$, our upper bound on ${\rm rank}_\epsilon(X)$ grows linearly in the dimension $n$.
On the other hand, we can see that for large $n$ and $\epsilon$, the growth of ${\rm rank}_\epsilon(X)$ is approximately logarithmic in $n$.

\begin{figure}
\centering
\includegraphics[width=.65\textwidth]{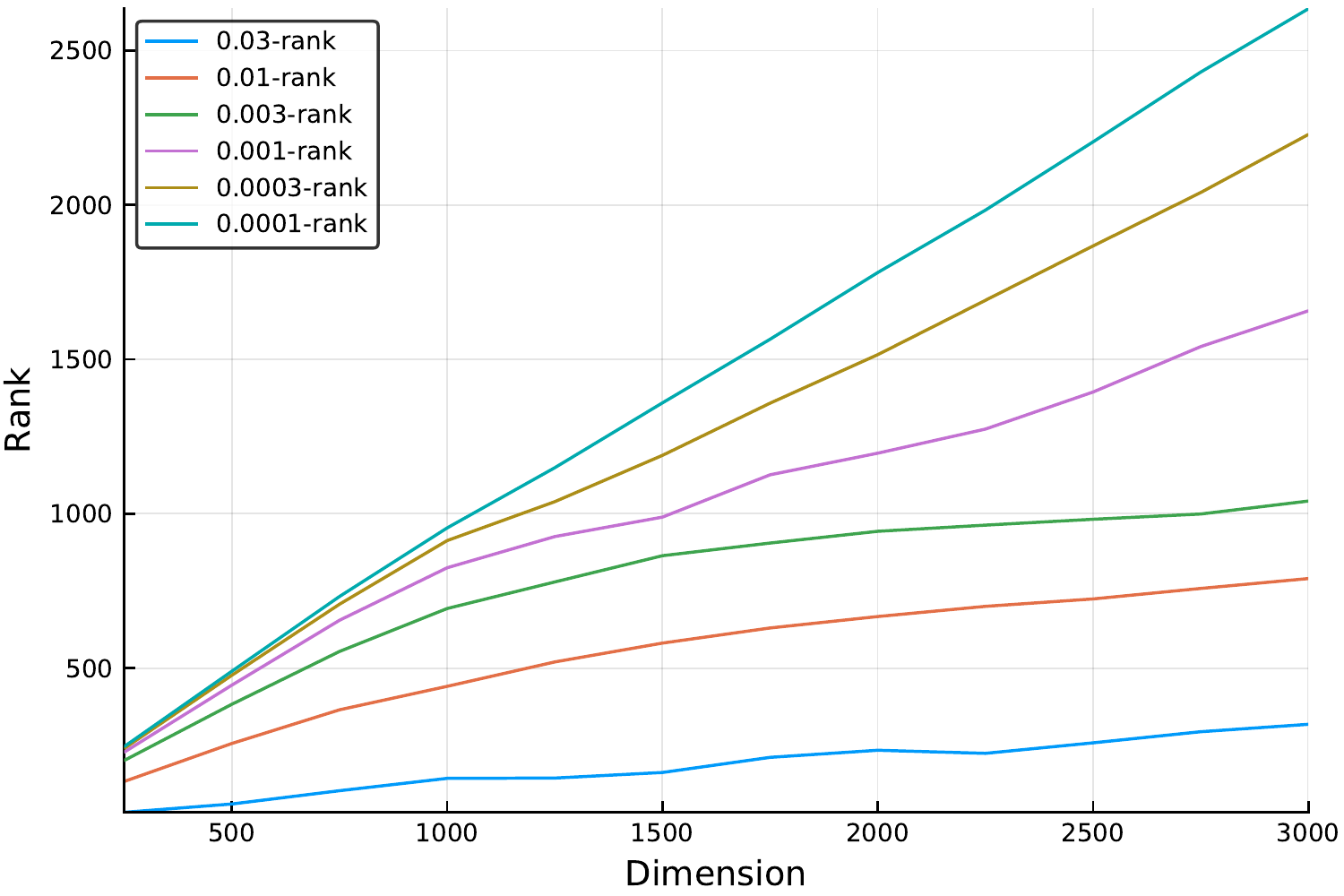}
\caption{\label{fig-large-N}
An upper bound on ${\rm rank}_\epsilon(X)$ for $0.0001\leq \epsilon \leq 0.03$ and $300\leq n\leq 3000$.}
\end{figure}

\section*{Conclusion}
This paper seeks to answer the question:
``Why are low rank techniques so effective for solving problems in data analysis and machine learning?''
Theorem~\ref{thm-main-detailed} provides a partial explanation for its effectiveness:
when rows and columns of the data are drawn from
a nice and consistent distribution, the rank of the resulting matrix cannot increase
very quickly. Formally, we have shown that nice latent variable models give rise to matrices that have
an $\epsilon$-rank that grows only logarithmically with the matrix dimensions, with respect to the maximum absolute entry norm.
This suggests that low rank structure in large datasets is a universal feature and
provides a broad motivation for low rank techniques in data science and machine learning.



\section*{Acknowledgements}
We would like to thank Siddhartha Banerjee, Lijun Ding, and Joel Tropp for useful discussions.

\bibliographystyle{siam}
\bibliography{exlog}

\end{document}